
\documentclass{KERauth}
\usepackage{algorithm}
\usepackage[noend]{algorithmic}
\usepackage{float}
\usepackage{subcaption}
\usepackage{wrapfig}
\usepackage{graphicx}
\usepackage{float}
\usepackage{color}
\usepackage[hidelinks]{hyperref}
\usepackage{times}
\usepackage{natbib}
\usepackage{dirtytalk}
\usepackage{booktabs}







\newtheorem{deff}{Definition} 

\newtheorem{proposition}{Proposition}


\def\statespace {{\cal S}}
\def\actionspace {{\cal A}}
\def\transitionmodel {{\cal T}}
\def\Reward {{\cal R}}


\begin{document}
\KER{1}{999}{00}{0}{2004}{S000000000000000}
\runningheads{A. Rosenfeld, M. Cohen, M. E. Taylor and S. Kraus}{Leveraging human knowledge in tabular reinforcement learning}

\title{Leveraging human knowledge in tabular reinforcement learning: A study of human subjects}

\author{ARIEL ROSENFELD\affilnum{1}, 
MOSHE COHEN\affilnum{2},
MATTHEW E. TAYLOR\affilnum{3}, and
SARIT KRAUS\affilnum{2}
}

\address{\affilnum{1} Department of Computer Science and Applied Mathematics, Weizmann Institute of Science, Rehovot, Israel. \\
\email{arielros1@gmail.com} \\
\affilnum{2} Department of Computer Science, Bar-Ilan University, Ramat-Gan, Israel. \\
\affilnum{3} Department of Computer Science, Washington State University, Pullman, Washington, USA. \\
}


\begin{abstract}
Reinforcement Learning (RL) can be extremely effective in solving complex, real-world problems. However, injecting human knowledge into an RL agent may require extensive effort and expertise on the human designer's part. To date, human factors are generally not considered in the development and evaluation of possible RL approaches. In this article, we set out to investigate how different methods for injecting human knowledge are applied, \textit{in practice}, by human designers of varying levels of knowledge and skill. We perform the first empirical evaluation of several methods, including a newly proposed method named \textbf{SASS} which is based on the notion of similarities in the agent's state-action space. Through this human study, consisting of 51 human participants, we shed new light on the human factors that play a key role in RL. We find that the classical \textbf{reward shaping} technique seems to be the most natural method for most designers, both expert and non-expert, to speed up RL. However, we further find that our proposed method \textsc{SASS} can be effectively and efficiently combined with reward shaping, and provides a beneficial alternative to using only a single speedup method with minimal human designer effort overhead. 
\end{abstract}


\section{Introduction}

\noindent Reinforcement Learning \citep{sutton1998reinforcement} (RL) has had many successes solving complex, real-world problems. However, unlike supervised machine learning, there is no standard framework for non-experts to easily try out different methods (e.g., Weka \citep{witten2016data}), which may pose a barrier to wider adoption of RL methods. While many frameworks exist, such as RL-Glue \citep{rl-glue}, RLPy \citep{RLPy}, PyBrain \citep{schaul2010}, OpenAI-Gym \citep{openai-gym} and others, they all assume some and sometimes even a substantial amount of RL knowledge. Substantial effort is required to add new tasks or instantiate different techniques within these frameworks. Another barrier to wider adoption of RL methods, which is the focus of this article, is the fact that injecting human knowledge (which can significantly improve the speed of learning) can be difficult for a human designer. 

When designing an RL agent, human designers must face the question about how much human knowledge to inject into the system and, more importantly, which approach to use for injecting the desired knowledge. From the \textit{AI research perspective}, the more an agent can learn autonomously, the more interesting and beneficial the agent will be. From the \textit{engineering or more practical perspective}, more human input is desirable as it can help improve the agent's learning as well as the speed at which the agent learns. However, this knowledge is only useful as long as it is practical for it to be gathered and leveraged by the human designer. 
In order for RL methods to move beyond requiring developers to fully understand the \say{black arts} of generalization, approximation and biasing, it is critical that the community better understand if and how both expert and non-expert humans can provide useful information for an RL agent. This article \textit{takes the problem to the field} and focuses on human designers who have a background in AI and coding, but varying experience in RL. 

The baseline approach in this study is to allow no generalization: an agent's interactions with its environment will immediately affect only its current state in a tabular representation. We will compare this baseline with two widely common speedup approaches: \textit{function approximation} \citep{busoniu2010reinforcement} (FA), and \textit{Reward Shaping} (RS) \citep{mataric1994reward}. We further propose and evaluate a novel approach which we name \textsc{SASS}, which stands for \textit{State Action Similarity Solutions}, which relies on hand-coded state-action similarity functions. We test the three speedup approaches in a first-of-its-kind human study consisting of three experts (highly experienced programmers with an RL background, but not co-authors of this paper) and 48 non-expert computer science students.\footnote{All experiments were authorized by the corresponding institutional review board.}
To that end, three RL tasks of varying complexities are considered: 
\begin{enumerate}
\item The ``toy" task of simple robotic soccer \citep{littman1994markov}, providing a basic setting for the evaluation.
  \item A large grid-world task named Pursuit \citep{benda1985optimal}, investigating the three speedup methods on a moderately challenging task. 
\item The popular game of Mario \citep{karakovskiy2012mario}, exemplifying the complexities of instantiating speedup approaches in complex tasks.
\end{enumerate}

Through this human study, we find that the \textsc{RS} technique is the most natural method for most designers, both expert and non-expert, to speed up RL across the three tasks. However, we further find that the newly proposed \textsc{SASS} method can be effectively and efficiently combined with \textsc{RS}, providing an additional speedup in most cases. 

This article argues that in order to bring about a wider adoption of RL techniques, specifically of generalization techniques, it is essential to both investigate and develop RL techniques appropriate for both expert and non-expert designers.  We hope that this study will encourage other researchers to invest an increased effort in the human factors behind RL and investigate their RL solutions in human studies.

The remainder of the article is organized as follows: In Section \ref{sec:RW} we review some preliminaries on RL and survey recent related work. In Section \ref{sec:similarities} we present the $QS$-learning algorithm that incorporates similarities within the basic $Q$-learning framework and discuss its theoretical foundations. We further propose three notions of state-action similarities and discuss how these similarities can be defined.  In Section \ref{sec:eval} we present an extensive empirical evaluation of the three tested approaches in three RL tasks. Finally, in Section \ref{sec:conclusions} we provide a summary and list future directions for this line of work.





\section{Preliminaries and Background}
\label{sec:RW}

An RL agent generally learns how to interact with an unfamiliar environment \citep{sutton1998reinforcement}. We define the RL task  using  the standard notation of a \textit{Markov Decision Process} (MDP). 
An MDP is defined as $\langle \statespace,\actionspace, \transitionmodel,\Reward,\gamma \rangle$ where:
	\begin{itemize}
	\item $\statespace$ is the state-space; 
	\item $\actionspace$ is the action-space;
	\item $\transitionmodel: \statespace\times \actionspace\times \statespace \rightarrow [0,1]$ defines the transition function, where  $\transitionmodel(s, a, s^\prime)$ is the probability of making a transition from state $s$ to state $s^\prime$ using action $a$; 
	\item $\Reward: \statespace\times \actionspace\times\statespace \rightarrow \mathbb{R}$ is the reward function; and
	\item $\gamma \in [0,1]$ is the discount factor, which represents the significance of future rewards compared to present rewards. 
	\end{itemize}
    
We assume $\transitionmodel$ and $\Reward$ are initially unknown to the agent. In discrete time tasks, the agent interacts in a sequence of time steps. At each time step, the agent observes its state $s\in\statespace$ 
and is required to select an action $a\in\actionspace$. The agent then  arrives at a new state $s^\prime$ according to the unknown $\transitionmodel$ function and receives a reward $r$ according to the unknown $\Reward$ function. We define an agent's \textit{experience} as a tuple $\langle s, a, r, s^\prime\rangle$ where action $a$ is taken in state $s$, resulting in reward $r$ and a transition to the next state, $s^\prime$.
The agent's objective is to maximize the accumulated discounted rewards throughout its lifetime. Namely, the agent seeks to find a policy $\pi:\statespace\mapsto\actionspace$ that maximizes the expected total discounted reward (i.e., expected return) from following it.

Temporal difference RL algorithms such as 
$Q$-learning \citep{watkins1989learning} approximate an action-value function $Q:\statespace\times\actionspace\mapsto\mathbb{R}$, mapping state-action pairs to the expected real-valued discounted return. $Q$-learning updates the $Q$-value estimation according to the temporal difference update rule 
\[
Q(s,a) = Q(s,a) + \alpha (r+ \gamma {max}_{a^\prime} Q(s^\prime,a^\prime) - Q(s,a))
\]
\noindent where $\alpha$ is the learning rate.
If both $\statespace$ and $\actionspace$ are finite sets, the $Q$ function can be easily represented in a table, namely in an $|\statespace|\times|\actionspace|$ matrix, where each state-action pair is saved along with its discounted return estimation. 
In this case, the convergence of $Q$-learning 
has been proven in the past (under standard assumptions). See \cite{sutton1998reinforcement} for more details. 
In this study, we focus on the $Q$-learning algorithm with a tabular representation of the $Q$ function. This scheme is, perhaps, the most basic and commonly applied in RL tasks and allows us to control for many of the  confounding factors in human experiments (e.g., implementation complexity). 

RL can often suffer from slow learning speeds. To address this problem, designers infuse human-generated, domain-specific knowledge into the agent's learning process in different ways, enabling better generalization across small numbers of samples. 
Another interpretation for this approach is allowing the agent to better understand and predict what a \say{human agent} would do or conclude in a given setting and leverage this prediction to make better decisions on its own (see \cite{rosenfeld2018predicting}).  

Perhaps the most prominent method for leveraging human knowledge to speedup RL is \textit{Function Approximation} \citep{busoniu2010reinforcement} (FA).
The \textsc{FA} approach focuses on mitigating the costs associated with maintaining and manipulating the value of every state-action pair, as in the usual tabular representation case. Specifically, using \textsc{FA}, a designer needs to abstract the state-action space in a sophisticated manner such that the (presumed) similar states or state-action pairs will be updated together and dissimilar states or state-action pairs are not. This allows the RL learner to quickly generalize each of its experiences such that the value of more than a single state or state-action pair is updated simultaneously. 
\textsc{FA} is based on the premise that a human designer can recognize features or pattens in the environment by which one can determine a successful policy. Many successful RL applications have used highly engineered state features to bring about a successful learning performance (e.g., `the distance between the simulated robot soccer player with the ball to its closest opponent' and `the minimal angle with the vertex at the simulated robot soccer player with the ball between the closest teammate and any of the opponents' \citep{LNAI2005-keepaway}). 
With the recent successes of DeepRL \citep{mnih2015human}, convolutional neural networks were shown to successfully learn features directly from pixel-level representations. However, such features are not necessarily optimal. A significant amount of designer time is necessary to define the deep neural network's architecture, and a significant amount of data is required to learn the features.

Another popular approach of injecting human knowledge to an RL learner is \textit{Reward Shaping} (RS) \citep{mataric1994reward}. Reward shaping attempts to bias the RL learner's decision-making  by adding additional localized rewards that encourage a behavior consistent with some prior knowledge of the human designer. Specifically, instead of relying solely on the reward function $\Reward$, the agent considered an augmented reward signal $R(s,a,s')+F(s,a,s')$ where $F$ is the shaping reward function articulated by the human designer. The \textsc{RS} approach is inspired by Skinner's recognition of the effectiveness of training an animal by reinforcing successive approximations of the desired behavior \citep{skinner1958reinforcement}. While the use of \textsc{RS} may result in undesirable learned behavior in the general case (e.g., \citep{randlov1998learning}), if \textsc{RS} is applied carefully (e.g., using the  Potential Based Reward Shaping (PBSR) method \citep{ng1999policy}), one can guarantee that the resulting learned policy is unaltered and, in many cases, produce a significant speedup. 
Indeed, \textsc{RS} has been utilized by many successful RL applications, significantly speeding up the agent's learning process (e.g., 'encouraging simulated robotic soccer players to spread out across the field' and 'encouraging a simulated robotic soccer player to tackle the ball on defense' \citep{devlin2011multi}).

Another related line of research investigates providing direct biasing from non-expert humans, such as incorporating human-provided 
feedback \citep{AAMAS10-knox,peng2016need} or demonstrations \citep{brys2015reinforcement}. For example, one may ask a non-expert user to teleoperate the agent or provide online feedback for the agent's actions. These methods do not require significant technical abilities on the part of the human (e.g., programming is not needed). In this article, we consider a possibly complementary approach, leveraging \textit{technically-able human designers'} knowledge and technical abilities, either in terms of providing an abstraction, a reward shaping function or a similarity function, to improve the agent's performance. The investigated approaches can be integrated with direct biasing as well. We leave the examination of non-technical methods (e.g., direct biasing) and non-technical designers (e.g., designers who cannot program) for future work.


While the above (and other) methods of leveraging human knowledge to speed up RL learners have been thoroughly investigated with respect to their theoretical properties and empirical performance in various settings, their deployment often requires extensive engineering and expertise on the designers' part. To the best of our knowledge, designers' efforts and expertise have not been explicitly considered in past works (e.g., methods are not evaluated in terms of the amount of time a developer must invest in order to fine-tune parameters, select appropriate state representations, etc., and developers' experience and expertise are generally not considered). \textit{This is the first article to examine these two issues in practice.}

Our proposed method, \textsc{SASS}, is investigated theoretically and empirically in this study. \textsc{SASS} heavily relies on the notion of generalization through \textit{similarity}. This notion is also common in other techniques that allow the learning agent to provide predictions for unseen or infrequently visited states.
For instance, Texplore \citep{hester2013texplore} uses supervised learning techniques 
to generalize the effects of actions across different states. The assumption is that actions are likely to have similar effects across states. 
\cite{tamassia2016dynamic} suggest a different approach: dynamically selecting state-space abstraction by which different states that share the same abstraction features are considered similar.
\cite{sequeira2013associative} and \cite{girgin2007positive} have presented variations of this notion by identifying associations online between different states in order to define a state-space metric or equivalence relation. However, all of these methods assume that an expert RL designer is able to iteratively define and test the required similarities without explicit cost. This is not generally the case in practice.

Note that alternative updating approaches such as eligibility traces \citep{sutton1998reinforcement}, where multiple states can be updated based on time since visitation, are popular as well. For ease of analysis, this study does not directly address such methods, which are left for future work. 

\subsection*{QS-learning}\label{sec:QS}

In order to integrate our \textsc{SASS} generalization approach within the $Q$-learning framework, we adopt a previously introduced technique \citep{ribeiro1995attentional} where $Q$-learning is combined with a spreading function that ``spreads" the estimates of the $Q$ function in a given state to neighboring states, exploiting an assumed spatial smoothness of the state-space. Formally, given an experience $\langle s,a,r,s^\prime \rangle$ and a spreading function $\sigma:\statespace\times\statespace\mapsto[0,1]$ that captures how \textit{``close"} states $s$ and $\tilde{s}$ are in the environment, a new update rule is used:

\begin{equation}
 Q(\tilde{s},a) = Q(\tilde{s},a) + \alpha \sigma(s,\tilde{s})\delta
\label{eq:QSupdate} 
\end{equation} 
\noindent where $\delta$ is the temporal difference error term ($r+ \gamma max_{a^\prime} Q(s^\prime,a^\prime) - Q(s,a)$).
The update rule in Eq.\ \ref{eq:QSupdate} is applied to all states in the environment after each experience. 
The resulting variation is denoted as $QS$-learning ($S$ stands for spreading). 
This method was only tested with author-defined spreading functions in simple grid worlds.




Note that standard $Q$-learning is a special case of $QS$-learning by setting the function $\sigma$ to the Kronecker delta ($\delta_k(x,y) = 1$ if $x = y$, otherwise $\delta_k(x,y) = 0$). 
\begin{proposition}
$QS$-learning converges to the optimal policy given the standard condition for convergence of $Q$-learning and either: 1) $\sigma$ which is fixed in time; or 2) $\sigma$ that converges to the Kronecker delta over the state-action space at least as quickly as the learning rate $\alpha$ converges to zero.
\end{proposition}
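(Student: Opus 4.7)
The plan is to frame $QS$-learning as a perturbation of ordinary tabular $Q$-learning and then invoke a classical stochastic-approximation convergence theorem for contraction mappings (e.g., Jaakkola--Jordan--Singh or Tsitsiklis). Concretely, I would rewrite the update in Eq.~\ref{eq:QSupdate} by splitting the spreading coefficient into its Kronecker part and a residual:
\[
\alpha\,\sigma_t(s_t,\tilde{s})\,\delta_t \;=\; \alpha\,\delta_k(s_t,\tilde{s})\,\delta_t \;+\; \alpha\bigl[\sigma_t(s_t,\tilde{s})-\delta_k(s_t,\tilde{s})\bigr]\,\delta_t.
\]
The first summand is exactly the canonical $Q$-learning increment, applied precisely when $\tilde{s}=s_t$; the second is an ``off-site'' bias term that, for $\tilde{s}\neq s_t$, updates $Q(\tilde{s},a)$ using a temporal-difference error computed elsewhere in the state space. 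Under the standard $Q$-learning hypotheses (bounded rewards, $\gamma<1$, infinite visitation of every state-action pair, and Robbins--Monro step sizes $\sum_t\alpha_t=\infty$, $\sum_t\alpha_t^2<\infty$), one first establishes uniform boundedness of $Q_t$ and hence of $|\delta_t|$, so that the off-site bias is controlled entirely by the residual $|\sigma_t-\delta_k|$.

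For case~1, when $\sigma$ is fixed in time and equal to the Kronecker delta---the only choice of fixed $\sigma$ compatible with unbiased TD updates everywhere---the residual vanishes identically and the recursion is ordinary tabular $Q$-learning, whose convergence to $Q^*$ is classical. For case~2, the residual is non-zero at each step but shrinks: since $\sigma_t\to\delta_k$ at a rate at least as fast as $\alpha_t\to 0$, we have $|\sigma_t-\delta_k|=O(\alpha_t)$, so the perturbation $\beta_t := \alpha_t(\sigma_t-\delta_k)\delta_t$ is of order $\alpha_t^2$ and is absolutely summable by the square-summability of the learning rates. Writing the iterate as $Q_{t+1}=(1-\alpha_t)Q_t+\alpha_t(HQ_t+w_t)+\beta_t$, with $H$ the Bellman optimality operator (a $\gamma$-contraction in the sup-norm) and $w_t$ the zero-mean TD noise, this places us inside the ``summable-perturbation'' regime of the cited theorems, so the limit point coincides with the unperturbed fixed point $Q^*$ and greediness yields the optimal policy.

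I expect the main obstacle to be a careful accounting of the cross-contamination introduced by the off-site updates---demonstrating that applying a spread update to \emph{every} state after every experience does not quietly accumulate into a persistent bias in the limit. A subtle point is that a single environmental experience triggers simultaneous updates at many $\tilde{s}$ with effective rates $\alpha_t\sigma_t(s_t,\tilde{s})$, so one must verify that each individual state-action pair still satisfies a Robbins--Monro-type condition along its own visitation subsequence. This follows from the infinite-visitation assumption together with $\sigma_t(s,s)\to 1$ on the diagonal, but the bookkeeping is precisely where the case~2 rate hypothesis is actually consumed; without it, the off-diagonal bias would persist and the fixed point would drift away from $Q^*$.
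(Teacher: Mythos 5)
Your case~2 argument is essentially sound and matches the spirit of what the paper invokes: the paper's own ``proof'' is a two-line deferral to \cite{szepesvari1999unified} and \cite{ribeiro1996q}, and your summable-perturbation decomposition (residual $O(\alpha_t)$, hence perturbation $O(\alpha_t^2)$, hence absolutely summable under Robbins--Monro step sizes) is a legitimate way to make the second case explicit within the standard stochastic-approximation framework. The remark about verifying a Robbins--Monro condition along each state-action pair's own effective update sequence is also the right place to worry.

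However, there is a genuine gap in your treatment of case~1. You assert that the Kronecker delta is ``the only choice of fixed $\sigma$ compatible with unbiased TD updates everywhere'' and thereby reduce case~1 to ordinary $Q$-learning. That is not what the proposition claims and it contradicts the entire premise of the paper: Definition~\ref{def:similarityFunction} only requires $\sigma(s,a,s,a)=1$ and explicitly permits nonzero off-diagonal similarities, and the whole point of \textsc{SASS} is that designers supply such a \emph{fixed, nontrivial} $\sigma$. For a general fixed $\sigma$ your residual term is of order $\alpha_t$, not $\alpha_t^2$, so it is not summable and your perturbation argument collapses --- the iterates converge not to $Q^*$ but to the fixed point of a modified (``spread'') Bellman operator. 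This is precisely why the proposition is phrased as convergence to the optimal \emph{policy} rather than to $Q^*$: the content of case~1, which is Ribeiro's result \citep{ribeiro1996q}, is that this biased fixed point still induces the optimal greedy policy under the stated conditions. Your framework cannot reach that conclusion; to close the gap you would need either to reproduce Ribeiro's contraction argument for the spread operator and the accompanying argmax-preservation step, or to do what the paper does and cite it.
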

\begin{proof}
The proposition is a combination of two proofs available in \citep{szepesvari1999unified} and  \citep{ribeiro1996q}.
Both were proven for the update rule of Eq.\ \ref{eq:QSupdate} without loss of generality, and therefore apply to the $QS$-learning update rule of Eq.\ \ref{eq:QSAupdate} (page \pageref{eq:QSAupdate}) as well.
\end{proof}





\section{The SASS Approach}\label{sec:similarities}

Our proposed method, \textsc{SASS}, leverages a human designer's \textit{constructivism} \citep{bruner1957going}, specifically \textit{personal construct psychology} \citep{kelly1955personal}. Constructivism is a well-established psychological theory where people make sense of the world (situations, people, etc.) by making use of constructs (or clusters), which are  perceptual categories used for evaluation by considering members of the same construct as \textit{similar}. It has been shown that people who have many different, possibly overlapping, and abstract constructs have greater flexibility in understanding the world and are usually more robust against inconsistent signals.  The \textsc{SASS} approach is inspired by constructivism, allowing a designer to define both complex as well as simplistic constructs of similar state-action pairs according to one's knowledge, abilities and beliefs, and refine them as more experience is gained. This approach is in contrast to more complex types of generalization (e.g., specifying the width of a tile, the number of tiles, and the number of tilings in a CMAC~\citep{Albus:1981:BBR:542806} or specifying the number of neurons, number of layers, and activation functions in a deep net). 
Specifically, in designing and testing an RL agent, the human designer himself learns the traits of the domain at hand by identifying patterns and domain-specific characteristics. To accommodate both prior knowledge and learned insights (which may change over time),
it is necessary to allow the designer to easily explore and refine different similarity hypotheses (i.e., constructs). 
For instance, a designer may have an initial belief that the state-action pair $s,a$ has the same expected return as some other state-action pair $\tilde{s},\tilde{a}$. Using \textsc{FA}, this can easily be captured by mapping both pairs into a single meta state-action pair. However, after gaining some experience in the domain, the designer refines his belief and presumes that the two pairs are merely \textit{similar} (they would have close expected returns if they were to be modeled separately). This difference can have a significant effect on both the learning efficiency and the resulting policy (which may be suboptimal).


In this study, we assume that the similarity function is defined and refined by a \textit{human designer} during the development of the RL agent as follows:

\begin{deff}\label{def:similarityFunction}
Let $\statespace$, $\actionspace$ be a state-space and an action-space, respectively.\\
A \textbf{similarity function} $\sigma:\statespace\times\actionspace\times\statespace\times\actionspace\mapsto[0,1]$ maps every two state-action pairs in $\statespace\times\actionspace$ to the degree to which we expect the two state-action pairs to have a similar expected return. 
$\sigma$ is considered \textbf{valid} if $\forall \langle s,a\rangle\in\statespace\times\actionspace$.  $\sigma(s,a,s,a)=1$.
\end{deff}

Similarity functions can be defined in multiple ways in order to capture various assumptions and insights about the state-action space. As shown in constructivism literature \citep{bruner1957going}, some people may use simplistic, crude similarities that allow quick (and usually, inaccurate) generalizing of knowledge across different settings. Others may use complex and sophisticated similarity functions that will allow a more fine-grained generalization.  Although people can easily identify similarities in real-life, they are often incapable of articulating sophisticated rules for defining such similarities. Therefore, in the following, we identify and discuss three notable similarity notions that were encountered repeatedly in our human study (Section \ref{sec:eval}), covering the majority of human-designed similarity functions in our tested domains.  
\begin{enumerate}
\item \textbf{Representational Similarity} from the tasks' state-action space. \textsc{FA} is perhaps the most popular example of the use of this technique. The function approximator (e.g., tile coding, neural networks, abstraction, etc.) approximates the $Q$-value 
and therefore implicitly forces a generalization over the feature space. A common method 
is using a factored state-space representation, where each state is represented by a vector of features that capture different characteristics of the state-space. Using such abstraction, one can define similarities using an index over the factored state-action (e.g., \citep{sequeira2013associative,brys2015reinforcement}).   
Defining representational similarities introduces the major engineering concern of choosing the right abstraction method or \textsc{FA} that would work well across the entire state-action space, while minimizing generalizing between dissimilar state-actions.
Representational similarity has repeatedly shown its benefit in real-world applications, but no one-size-fits-all method exists for efficiently representing the state-action space. See Figure \ref{fig:all} (a) for an illustration.

%
\item \textbf{Symmetry Similarity} seeks to 
consolidate state-action pairs that are identical or completely symmetrical in order to avoid redundancies. 
\cite{zinkevich2001symmetry} formalized the concept of symmetry in MDPs 
and proved that if such consolidation of symmetrical state-actions is performed accurately, then the optimal $Q$ function and the optimal policy are not altered. However, automatically identifying symmetries is computationally complex \citep{narayanamurthy2008hardness}, especially when the symmetry is only \textit{assumed}. For example, in the Pursuit domain, one may consider the 90$^{\circ}$, 180$^{\circ}$ and 270$^{\circ}$ transpositions of the state around its center (along with the direction of the action) as being similar (see Figure \ref{fig:all} (b)). However, as the predators do not know the prey's (potentially biased) policy, 
they can only assume such symmetry exists. 


\item \textbf{Transition Similarity} can be defined based on the idea of 
\textit{relative effects} of actions in different states. A relative effect is a change in the state's features caused by the execution of an action. Exploiting relative effects to speed up learning was proposed \citep{jong2007model,leffler2007efficient} in the context of model learning. For example, in the Mario domain, if Mario \textit{walks right} or \textit{runs right}, outcomes are assumed to be similar as both actions induce similar relative changes to the state (see Figure \ref{fig:all} (c)).
In environments with complex or non-obvious transition models, it can be difficult to intuit this type of similarity.
\end{enumerate}


\begin{figure}[tpb]%
\center
\includegraphics[width=\linewidth]{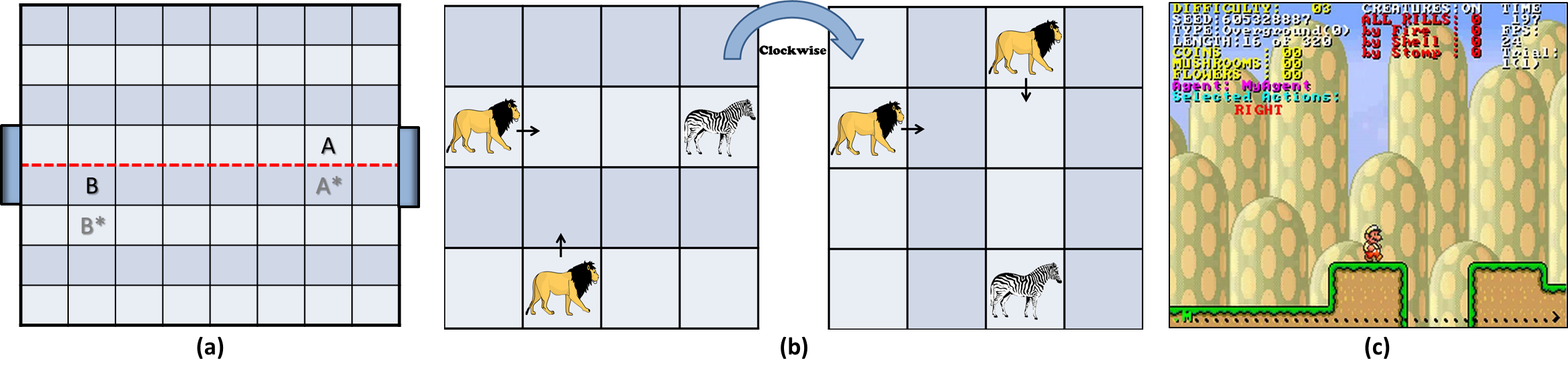}%
\caption{(a) Players in the simple robotic soccer task are A and B; the state in which the two players are moved one cell down (A* and B*) should be considered similar.
(b) Two (presumed) similar state-action pairs in the Pursuit domain. (c) A state in the Mario task where walking or running right are considered similar (i.e., falling into the gap).}%
\label{fig:all}%
\end{figure}


\subsection*{SASS in the $Q$-learning Framework}

We use the designer-provided similarity function $\sigma(s,a,s^\prime,a^\prime)$  
 instead of the spreading function needed by the $QS$-learning algorithm (as discussed in Section \ref{sec:QS}). 
In words, for each experience $\langle s, a, r, s^\prime \rangle$ that the agent encounters, depending on the similarity function $\sigma$, we potentially update more than a single $\langle s,a\rangle$ entry in the $Q$ table. Multiple updates, one for each entry $\langle\tilde{s},\tilde{a}\rangle$ for which $\sigma(s,a,\tilde{s},\tilde{a})>0$, are performed using 
 the following update:
\begin{equation}
Q(\tilde{s},\tilde{a}) = Q(\tilde{s},\tilde{a}) + \alpha \sigma(s,a,\tilde{s},\tilde{a})\delta
\label{eq:QSAupdate}
\end{equation}
\noindent which, as discussed in Section \ref{sec:QS}, does not compromise the theoretical guarantees of the unadorned $Q$-learning.

The update rule states that as a consequence of experiencing $\langle s,a,r,s^\prime\rangle$, an update is made to other pairs $\langle\tilde{s}, \tilde{a}\rangle$ \textit{as if} the real experience was actually $\langle\tilde{s},\tilde{a},r,s^\prime \rangle$ (discounted by the similarity function). 

In order to avoid a time complexity of $O(|S||A|)$ per step, $QS$-learning should be restricted to update state-action pairs for which the similarity is larger than $0$. In our experiments (see Section \ref{sec:eval}) we found only a minor increase in time-complexity for most human-provided similarity functions. 

In the interest of clarity, from this point forward we will use the term $QS$-learning using the above \textsc{Q}-learning-with-\textsc{SASS} interpretation. Namely, using a designer-defined similarity function $\sigma$ and the update rule of Eq. \ref{eq:QSAupdate}, we will modify the classic $QS$-learning algorithm yet keep its original name due to their inherent resemblance. See Algorithm \ref{alg:QS} for the QS-learning Algorithm as used in this study.

\begin{algorithm}[h]
\caption{ \label{alg:QS} $QS$-learning Algorithm}
\begin{algorithmic}
	\REQUIRE State-space $\statespace$, Action-space $\actionspace$, discount factor $\gamma$, learning rate $\alpha$, similarity function $\sigma$\\
	\STATE initialize $Q$ arbitrarily (e.g. $Q(s,a)=0$)\\
	\FOR{t=1,2,\ldots}
    	\STATE $s$ is initialized to the \emph{starting} state\\
		\REPEAT
			\STATE choose an action $a \in A(s)$ based on $Q(s,a)$ and an exploration strategy\\
            \STATE perform action $a$
            \STATE observe the new state $s^\prime$ and receive reward $r$\\
			\STATE calculate temporal difference error: $\delta \gets r + \gamma \cdot \max_{ a^\prime \in A} Q(s^\prime, a^\prime) - Q(s,a)  $ \\  
			\FOR{\textbf{each} $\tilde{s},\tilde{a}\in \statespace\times\actionspace$ such that $\sigma(s,a,\tilde{s},\tilde{a})>0$}
				\STATE $Q(\tilde{s},\tilde{a}) = Q(\tilde{s},\tilde{a}) + \alpha \sigma(s,a,\tilde{s},\tilde{a})\delta$\\
			\ENDFOR
            \STATE $s \gets s^\prime$\\
		\UNTIL{$s^\prime$ is a terminal state}
	\ENDFOR
\end{algorithmic}
\end{algorithm} 


\section{Evaluation}\label{sec:eval}

Our human subject study is comprised of three experimental settings: First, we examine the \textsc{SASS} approach against a baseline learner (i.e., no speedup method) and the \textsc{FA} approach in the \textit{simple robotic soccer} task with 16 non-expert developers. Through this experiment, which we will refer to as \textbf{Experiment 1}, we show the potential benefits of the \textsc{SASS} approach compared to \textsc{FA} given basic, classic reward shaping taken from previous works. Next, we evaluate all three speedup approaches (\textsc{FA}, \textsc{RS}, and \textsc{SASS}) along with a baseline learner using the \textit{Pursuit} and \textit{Mario} tasks. Through this experiment, which we will refer to as \textbf{Experiment 2}, we find that reward shaping provides the most natural approach of the three for most non-expert developers. However, the results further show that the combination of \textsc{RS} and \textsc{SASS} (as was tested in \textit{Experiment 1}) can bring about significant potential benefits with minimal overhead effort. Lastly, in \textbf{Experiment 3}, we evaluate all three tasks using three \textit{expert developers}. The results support our findings in Experiments 1 and 2, demonstrating high effectiveness for the combination of \textsc{RS} and \textsc{SASS} compared to the individual use of each approach.  

Throughout this section, we will use the following notations: a basic \textsc{Q}-learning agent is denoted \textsc{Q}, a \textsc{QS}-learning agent is denoted \textsc{QS}, a \textsc{Q}-learning agent that uses state-space abstraction is denoted \textsc{QA}, a \textsc{Q}-learning agent that uses reward shaping is denoted \textsc{QR} and an agent which combines reward shaping and similarities is denoted \textsc{QRS}. 

We first discuss the three domains we tested in this study followed by the three experiments.

\subsection{Evaluated Domains}\label{sec:domains}

\subsubsection{Simple Robotic Soccer}\label{sec:SRS}
Proposed in \citep{littman1994markov}, the task is performed on an $8\times 8$ grid world, defining the state-space $S$. Two simulated robotic players occupy distinct cells on the grid and can either move in one of the four cardinal directions or stay in place (5 actions each).  The simulated robots are designed to play a simplified version of soccer: At the beginning of each game, players are positioned according to Figure \ref{fig:all}(a) and possession of the ball is assigned to one of the players (either the learning agent or the fixed, hand-coded policy opponent\footnote{The opponent was given a hand-coded policy, similar to that used in the original paper, which instructs it to avoid colliding with the other player while it has the ball and attempts to score a goal. While defending, the agent chases its opponent and tries to steal the ball.}). During each turn, both players select their actions simultaneously 
and the actions are executed in random order. When the attacking player (the player with the ball) executes an action that would take it to a square occupied by the other player, possession of the ball goes to the defender (the player without the ball) and the move does not take place. A goal is scored when the player with the ball enters the other player's goal region. Once a goal is scored the game is won; the agent who scored receives 1 point, the other agent receives -1 point and the game is reset. The discount factor was set to 0.9, as in the original paper. 

We used a basic state-space representation, as done in \cite{martins2013heuristically}, a recent investigation of the game. 
A state $s$ is represented as a 5-tuple $\langle x_A$, $y_A$, $x_B$, $y_B$, $b\rangle$ where $x_i$ and $y_i$ indicate player \textit{i}'s position on the grid and $b\in\{A,B\}$ indicates which player has the ball.
The action-space is defined as a set of 5 actions as specified above. Overall, the state-action space consists of approximately 41,000 state-action pairs ($8^4 \ (\text{locations}) \times 2 \ (\text{ball-possession}) \times 5 \ (\text{actions})$).

\subsubsection{Pursuit}\label{sec:chase}
The Pursuit task (also known as Chase or Predator/Prey task) was proposed by \cite{benda1985optimal}. 
For our evaluation, we use the recently evaluated instantiation of Pursuit implemented in \cite{brys2014combining}. According to the authors' implementation, there are two predators ($Pred_1, Pred_2$) and one prey ($Prey$), each of which can move in one of the four cardinal directions as well as stay in place (5 actions each) on a $20\times 20$ grid world. The prey is caught when a predator moves onto the same grid cell as the prey. In that case, a reward of $1$ is given to the predator, $0$ otherwise.   
A state $s$ is represented as a 4-tuple $\langle\Delta_{x_1},\Delta_{y_1},\Delta_{x_2},\Delta_{y_2}\rangle$ where $\Delta_{x_i}$ ($\Delta_{y_i}$) is the difference between predator $i$'s x-index (y-index) and the prey's x-index (y-index). 
Overall, the state-action space consists of approximately 46 million state-action pairs ($39^4 ~(\text{differences}) \times 20 ~(\text{actions}))$. 

\subsubsection{Mario}\label{sec:Mario}

Super Mario Bros is a popular 2-D side-scrolling video game developed by the Nintendo Corporation. This popular game is often used for the evaluation of RL techniques \citep{karakovskiy2012mario}. In the game, the player's figure, Mario, seeks to rescue the princess while avoiding obstacles, fighting enemies and collecting coins. 
We use the recently evaluated formulation of the Mario task proposed by \cite{suay2016learning}. The authors use a 27-dimensional discrete state-variables representation of the state-space and model 12 actions that Mario can take. We refer the reader to the original paper for the complete description of the underlying MDP and parameters. 
Given the authors' abstraction of the state-space, the size of the state-action space is over \textit{100 billion}, although many of the possible states are never encountered in reality. For example, it is impossible to have Mario trapped by enemies from all directions at the same time. Due to the huge state-action space, and unlike the Simple Robotic Soccer and Pursuit tasks, a condition where $Q$-learning is evaluated without the authors' abstraction will not be evaluated.

\medskip
In the Pursuit and Mario tasks, we use $Q(\lambda)$-learning and $QS(\lambda)$-learning, which are slight variations of the $Q$-learning and $QS$-learning algorithms that use eligibility traces \citep{sutton1998reinforcement}. The addition of eligibility traces to the evaluation was carried out as done by the authors of the recent papers from which the implementations have been taken, namely \cite{brys2014combining} and \cite{suay2016learning}. This allows us to evaluate the different approaches with recently provided baseline solutions without altering their implementations.

\subsection{Experiment 1: Initial Non-Expert Developers Study}\label{sec:exp1}



In this experiment, we seek to investigate the potential benefits of the \textsc{SASS} approach. We focus on technically-able non-experts with some background in programming and RL.  We speculate that participants would find the \textsc{SASS} approach more appealing than the \textsc{FA} approach which in turn will result in designers' $QS$ agents outperforming the designers' $QA$ agents. To examine this hypothesis, we recruited $16$ Computer Science graduate-students majoring in AI - 4 PhD students and 12 Masters students, ranging in age from 23 to 43 (average of 26.8), 10 males and 6 females - to participate in the experiment and act as non-expert designers for two RL agents ($QS$ and $QA$). All participants have some prior knowledge of RL from advanced AI courses (about 2 lectures) yet they cannot be considered experts in the field as they have no significant hands-on experience in developing RL agents. The students are majoring in Machine Learning (7), Robotics (4) and other computational AI sub-fields (5).


We chose to start with the \textit{Simple Robotic Soccer} domain, which is the simplest of the three evaluation domains in this study.    
Prior to the experiment, all subjects participated in an hour-long tutorial reminding them of the basics of $Q$-learning and explaining the Simple Robotic Soccer task's specification. The tutorial was given by the first author of this article, an experienced lecturer and tutor. 
Participants were then given two python codes: First, an implemented $QA$ agent for which participants had to design and implement a state-space abstraction. Specifically, the participants were requested to implement a single function that translates the na\"{\i}ve representation of the state-space to their own state-space representation. Second, participants were given a $QS$ agent for which they had to implement a similarity function. Both codes already implemented all of the needed mechanisms of the game and the learning agents, and they are available at \url{http://www.biu-ai.com/RL}.

In order to allow the participants to evaluate their agent's performance in reasonable time, a basic reward shaping was implemented under both conditions ($QA$ and $QS$) as suggested in the original Simple Robotic Soccer paper \citep{littman1994markov}. The suggested reward shaping is of a Potential Based Reward Shaping (PBRS) structure \citep{ng1999policy}, biasing the player to move towards the goal while on offense and towards the other player while on defense. It is important to note that the use of PBRS allows one to modify the reward function without altering the desired theoretical properties of $Q$-learning and $QS$-learning algorithms. 

We used a within-subjects experimental design where each participant was asked to participate in the task twice, a week apart. In both sessions, the participants' task was to design a learning agent that would outperform a basic $Q$ agent in terms of asymptotic performance and/or average performance (one would suffice to consider the task successful) by using either abstraction or similarities, in no more than 45 minutes of work. 
Ideally, we would want participants to take as much time as they need. However, given that each participant had to dedicate about 3 hours for the experiment (a one hour tutorial, 1.5 hours of programming, and half an hour of logistics) we could not ask participants for more than 45 minutes per condition. Participants were counter-balanced as to which method they were asked to implement first. After each session, subjects were asked to answer a NASA Task Load Index (TLX) questionnaire \citep{TLX}.

In order to ensure the scientific integrity of the submitted agents, participants were requested to perform the task in our lab, in a quiet room, using a designated Linux machine which we prepared for them. 
Furthermore, while programming, a lab assistant (who did not co-author this article) was present to assist with any technical issues. No significant technical difficulties were encountered that might jeopardize the results.  

We then tested the participant's submitted agents against the same hand-coded opponent against whom they had trained. During each session, participants could test the quality of their designed agent at any time by running the testing procedure, which worked as follows: The designed agent was trained for 1,000 games such that after each batch of 50 games, the learning was halted and 10,000 test games were played during which no learning occurred. The winning ratio for these 10,000 test games was presented to the designer after each batch. Given a `reasonable' number of updates per step (i.e., dozens to hundreds), the procedure does not take more than a few seconds on a standard PC. 
In order to allow designers to compare their agents' success to a basic $Q$ agent (the benchmark agent they were requested to outperform), each designer was given a report on a basic $Q$ agent that was trained and tested prior to the experiment using the same procedure described above. After all agents were submitted, each agent was tested and received two scores: one for its average performance during its learning period and one for the asymptotic performance of the agent, i.e., its performance after the training is completed. For this evaluation, we used the same machine used by the study participants, a Linux machine with 16 GB RAM and a CPU with 4 cores, each operating at 4 GHz. 
Each agent was evaluated 50 times over 1,000 episodes, so the score of each episode is in fact an average of the 50 evaluation runs.



\subsubsection*{Results}
Under the $QS$-learning condition, participants defined similarity functions.  
A similarity function is \say{beneficial} only if it helps the $QS$ agent outperform the basic $Q$ agent. Otherwise, we say that the similarity function is \say{flawed} in that it hinders learning.

When analyzing the average performance of the submitted agents, we see that out of the 16 submitted $QS$ agents, 12 (75\%) successfully used a beneficial similarity function. On the other hand, only 3 (19\%) of the 16 $QA$ agents outperformed the $Q$ agent. 
The average winning ratio recorded for the $QS$ agents throughout their training was $68.2\%$, compared to the $42.7\%$ averaged by the $QA$ agent and $60.8\%$ averaged by the benchmark $Q$ agent. 

Asymptotically, 13 out of the 16 $QS$ agents (81\%) outperformed or matched the basic $Q$ agent performance. \textit{None} of the $QA$ agents asymptotically outperformed the $Q$ agent. 
On average, under the $QS$-learning condition, participants designed agents that asymptotically achieved an average winning ratio of 74.5\%. The $QA$ agents achieved only 47.7\% and the $Q$ agent recorded 72.5\%.

Interestingly, \emph{all 16 participants} submitted $QS$ agents which perform better than their submitted $QA$ agents both in terms of average learning performance and asymptotic performance. Namely, the $QS$ agents' advantage over the $QA$ agents is most apparent when examining each designer separately. 
Furthermore, for all participants, the $QS$ agent outperforms the $QA$ agent from the 3$^{rd}$  test (the 150$^{th}$ game) onwards. For 9 of the 16 participants (56\%), the $QS$ agent outperformed the $QA$ agent from the very first test onwards. In addition, the $QS$ agents completed the learning period faster than the $QA$ agents on average, which may imply that a beneficial \textsc{SASS}-based logic is less complex than a \textsc{FA}-based one.

We further analyzed the types of similarities that participants defined under the $QS$-learning condition. This phase was done manually by the authors, examining the participants' codes and trying to reverse-engineer their intentions. Fortunately, due to the task's simple representation and dynamics, distinguishing between the different similarity notions was possible.
It turns out that representational and symmetry similarity notions were the most prevalent among the submitted agents. In 8 of the 16 $QS$ agents (50\%), representational similarities were instantiated, mainly by moving one or both of the virtual players across the grid, assuming that the further away one moves the player(s), the lower the similarity is to the original positioning (See Figure \ref{fig:all}(a)). Symmetry similarities were used by 7 of the 16 participants (43.7\%). All 7 of these agents used the idea of mirroring, where the state and action were mirrored across an imaginary horizontal line dividing the grid in half. Some of them also defined mirroring across an imaginary vertical line dividing the grid in half, with an additional change of switching ball position between the players. While we were able to show that each of these ideas is empirically beneficial on its own, we did not find evidence that combining them  brings about a significant change.  
Transitional similarities were only defined by 2 of the 16 participants (12.5\%). Both of these designers tried to consider a more strategic approach. 
For instance, moving towards the opponent while on defense is considered similar, regardless of the initial position.  It turns out that neither of the  provided transitional similarities were beneficial on their own as they were submitted by the designers.  

Only 4 of the 16 participants (25\%) used more than a single similarity notion while defining the similarity function. Interestingly, the two best performing $QS$ agents combined 2 notions in their similarity function (representational and symmetry similarities). We speculate that combining more than a single similarity notion can be useful for some designers, yet in the interest of keeping with the task's tight time frame, participants refrained from exploring \say{too many different directions} and focused on the ones they initially believed to be the most promising.

Recall that 4 participants (25\%) submitted flawed similarity functions. Although these participants were unable to find a beneficial similarity function, the submitted $QS$ agents were not considerably worse than the basic $Q$-learning. The average performance for these 4 agents was 56.9\% compared to 60.8\% for the basic $Q$ agent, and their average asymptotic score was 61.5\% compared to 72.5\% for the basic $Q$ agent.  

Unlike the significant difference between the $QA$-learning and $QS$-learning conditions in terms of agents' performance, a much larger number of participants is needed to achieve significant results in terms of TLX scores. Using the ANOVA one-way test on the experiment results we find an $f$-ratio of 1.5093 and a $p$-value of 0.2282, which do not reflect a significant difference. The complete TLX results are available at \url{http://www.biu-ai.com/RL}.

Overall, the results are aligned with our initial hypothesis and demonstrate that designers  better utilized the \textsc{SASS} approach compared to the \textsc{FA} approach. The results are summarized in Table \ref{table:exp1_non_experts}.

\begin{table*}[h]
	\centering
	{\def \arraystretch{2}
        \caption{Experiment 1 main results summary}
        \label{table:exp1_non_experts}
        \begin{tabular}{l c c c }
            \toprule
            Criteria &  QS & QA & Q \\ 
            \hline
            Avg. Winning Ratio (during training) & 68.2\% & 42.7\% & 60.8\% \\
            \hline
            Avg. Winning Ratio (asymptotically) & 74.5\% & 47.7\% & 72.5\% \\
            \hline
            Better agent than benchmark (during training) & 75\% & 19\% & - \\
            \hline
            Better agent than benchmark (asymptotically) & 81\% & 0\% & - \\
            \hline
            Best Agent (during training) & 75\% & 0\% & 25\% \\
            \hline
            Best Agent (asymptotically) & 81\% & 0\% & 19\% \\
            \bottomrule
        \end{tabular}
    }
    \par \bigskip
    The main results of Experiment 1 (non-expert study). The results show that the SASS approach allowed most designers to outperform the basic $Q$-learning condition and better infuse their domain-knowledge into the RL agent compared to the FA approach. The higher the score - the  better.
\end{table*}

\subsection{Experiment 2: Non-Expert Developers Study}\label{sec:exp2}

In Experiment 2 we seek to investigate three speedup methods: \textit{FA}, \textit{RS} and \textit{SASS}. Similar to Experiment 1, we speculate that participants would be able to utilize the \textsc{SASS} approach and produce agents which outperform the \textsc{QA} and \textsc{Q} agents. In addition, we speculate that \textsc{RS} would also be successfully utilized by designers to outperform the \textsc{QA} and \textsc{Q} agents. We again focus on technically-able non-expert designers who have a strong background in programming yet a very limited experience with RL. Similar to Experiment 1, we required $32$ human participants, all of whom were senior Bachelors or beginning graduate students who are majoring in AI and have participated in an advanced AI course. The participants ranged in age from 20 to 50 (average of 27.2), 23 male and 9 female. The students are majoring in Machine Learning (22), Robotics (7) and other computational AI sub-fields (3). None of the participants in this experiment participated in Experiment 1. 

Unlike Experiment 1, in this experiment we investigate two more complex RL tasks: Pursuit and Mario. First, we randomly assigned each participant to one of two equally-sized groups. Each group was assigned a different domain; either Pursuit or Mario. Similar to Experiment 1, participants were given three Java codes: an implemented $QA$ agent for which participants had to design and implement a state-space abstraction, a $QS$ agent for which participants had to implement a similarity function, and a $QR$ agent for which participants had to implement a reward shaping function. Note that the last condition ($QR$-learning) was not present in Experiment 1 as a basic shaping reward was already implemented as discussed in Section \ref{sec:exp1}. All codes had already implemented all of the needed mechanisms of the game and the learning agents, and they are available at \url{http://www.biu-ai.com/RL}. 
\textit{It is important to stress that, unlike Experiment 1, we provided no basic reward shaping for the agents.} 

We again use a within-subjects experimental design where each participant was asked to participate in the task \textit{thrice}, with a week separating every two consecutive conditions. Due to the increased complexity of the two domains tested in this experiment compared to Experiment 1, and to allow easy reproducibility of the experiment, participants were given an interactive PowerPoint presentation that introduced the problem domain as well as reminded them of the fundamentals of the tested speedup methods instead of the 1-hour tutorial given in Experiment 1. The PowerPoint presentations are available on our website \url{http://www.biu-ai.com/RL}. As before, in all sessions, the participants' task was to design a learning agent that would outperform a basic $Q$-learning condition in terms of asymptotic performance and/or average performance (one would suffice to consider the task successful) by using either \textsc{FA}, \textsc{SASS}, or \textsc{RS}, in no more than 45 minutes of work for each condition. In this experiment, participants had to devote about 4 hours due to the additional conditions and logistics.

Similar to Experiment 1, participants were instructed to use a designated machine in our lab and were assisted by a lab assistant in case they faced any technical difficulties. No significant technical difficulties were encountered that might jeopardize the results. 

Participants were counter-balanced as to which agent they had to implement first. Following each programming session, the participants were asked to answer a NASA TLX questionnaire. In addition, in order to acquire a better understanding of participants' subjective experience, an additional short questionnaire was administered. The questionnaire consisted of 9 statements to which participants had to rate the degree to which each statement reflects their subjective feeling on a 10-point Likert scale. For instance, \say{To what extent was the speedup method you used appropriate for the task you were required to complete?}. The complete questionnaire is available on our website - \url{http://www.biu-ai.com/RL}. The four key questions, which we will discuss here, can be found in Appendix \ref{app:exp2}.
During each session, participants could test the quality of their agent by running the following testing procedure: In the Pursuit task, the agent trained for 100,000 games, where after each batch of 100 games the average performance of the agent within that batch was presented graphically to the designer. In the Mario task, the agent was trained for 7,500 games, where after each batch of 100 games the average performance of the agent within that batch was presented graphically to the designer. The above procedure is slightly different from Experiment 1 due to time considerations: For the Pursuit task, most submitted agents completed 100,000 training games in no more than a few seconds on a standard PC. On the other hand, for the more complex Mario task, the test procedure took up to half a minute despite the limited training duration of only 7,500 games.

In order to allow designers to compare their agents' success to a basic $Q$-learning condition (the benchmark agent which they were requested to outperform), each designer was given a report on the performance of a basic $Q$ agent that was trained and tested prior to the experiment using the same procedure described above. 

For evaluation, we used the same machine used by the study participants, a Windows machine with 12 GB of RAM and a CPU with eight cores, each operating at 3 GHz.

In addition to the evaluation of the three methods that each designer developed during this experiment, we evaluated an additional condition. We manually combined each of the developers' $QR$ agents with his or her $QS$ agent, resulting in a new agent which we called \textit{QRS} agent. Note that each of the resulting $QRS$ agents uses both the reward shaping and similarity implementations of a specific participant. The $QRS$ agents are similar in spirit to the $QS$ agents from Experiment 1, as reward shaping was also implemented for these agents. 
It is important to mention that participants developed each agent independently and were not informed about this future combination of the $QS$ and $QR$ agents. In total, $128$ agents were evaluated for the two domains combined ($32$ participants, $4$ agents each). 

Recall that in the Pursuit and Mario tasks, we use $Q(\lambda)$-learning and $QS(\lambda)$-learning, which are slight variations of the $Q$-learning and $QS$-learning algorithms that use eligibility traces. 

\subsubsection*{Results}

We report the results for each group separately. 

\underline{Pursuit:} Recall that a submitted agent is considered successful if it outperforms the basic $Q$ agent in at least one of the two criteria of interest: average performance or  asymptotic performance.
In the Pursuit task, the score is the number of steps required by both predators to catch the prey. As a result, it is important to remember that \textit{the lower the score, the better}.

When analyzing the average performance of the submitted agents, we see that out of the 16 submitted $QS$ agents, 14 (87.5\%) successfully used a beneficial similarity function. Similar to the results of Experiment 1, very few of the submitted $QA$ agents (4 out of 16, 25.0\%) were able to outperform the basic $Q$-learning condition. When examining the submitted $QR$ agents, we see similar results to the $QS$-learning condition, with 15 out of the 16 submitted agents (93.75\%) outperforming the $Q$-learning condition. As for the $QRS$ agents, 14 out of the 16 agents (87.5\%) were successful, similar to  the $QS$-learning condition. The $QS$ agents achieved an average training score of 110.07, outperforming the $QA$ agents and the $Q$-learning baseline which scored 131.7 and 143.91, respectively. Interestingly, the $QR$ agents averaged a score of 44.66, less than half of what the $QS$ agents averaged. However, in the $QRS$-learning condition, where we manually combined the $QS$ and $QR$ agents of each study participant, the resulted  agent averaged a score 37.28, reducing the $QR$-learning condition average by 16.5\% and the $QS$-learning condition average by 66\%. 

Evaluating the asymptotic performance of the agents reveals similar results: Out of the 16 $QS$ agents, 12 outperformed or matched the basic $Q$-learning condition performance (75.0\%), averaging 33.03 compared to the asymptotic score of 36.08 by the $Q$ agent. Only 8 of the 16 $QA$ agents (50.0\%) were able to achieve the same, averaging 54.93. Almost all of the submitted $QR$ agents were able to outperform the basic $Q$ agent (15 out of 16, 93.75\%), averaging 25.47. The $QRS$ agents were suited between the $QS$-learning and $QR$-learning conditions, with 13 successful agents out of 16 (81.25\%), averaging 28.5. 

Interestingly, all $QS$, $QR$ and $QRS$ agents that outperformed the $Q$-learning condition on the criteria of average training performance managed to outperform the $Q$-learning condition asymptotically as well. Surprisingly, this does not hold for any of the $QA$ agents. 

The results are summarized in Table \ref{table:exp2_pursuit_non_experts} and illustrated in Figure \ref{fig:ex2_pursuit_non_experts}.

\begin{table*}[h]
	\centering
	{\def \arraystretch{2}
        \caption{Summary of Experiment 2 main results: Pursuit task}
        \label{table:exp2_pursuit_non_experts}
        \begin{tabular}{ p{130pt} c c c c c }
            \toprule
            Criteria & QS & QA & QR & QRS & Q \\
            \hline
            Avg. training performance (turns to win) & 110.07 & 131.70 & 44.66 & \textbf{37.28} & 143.97 \\
            \hline
            Avg. asymptotic performance (turns to win) & 33.03 & 54.93 & \textbf{25.47} & 28.59 & 36.08 \\
            \hline
            Better agent than benchmark (during training) & 14 (87.5\%) & 4 (25.0\%) & \textbf{15 (93.75\%)} & 14 (87.5\%) & - \\
            \hline
            Better agent than benchmark (asymptotically) & 12 (75.0\%) & 8 (50.0\%) & \textbf{15 (93.75\%)} & 13 (81.25\%) & - \\
            \hline
            Overall beneficial agents (during training or asymptotically)& 14 (87.5\%) & 11 (68.75\%) & \textbf{15 (93.75\%)} & 14 (87.5\%) & - \\
            \hline
            Best Agent (during training) & 2 (12.5\%) & 0 (0\%) & \textbf{7 (43.75\%)} & \textbf{7 (43.75\%)} & - \\
            \hline
            Best Agent (asymptotically) & 1 (6.25\%) & 2 (12.5\%) & \textbf{8 (50.0\%)} & 5 (31.25\%) & - \\
            \bottomrule
        \end{tabular}
    }
    \par \bigskip
    The main results of Experiment 2 (non-expert study). The results show that the SASS and RS approaches allow most designers to outperform the basic $Q$-learning condition and better infuse their domain-knowledge into the RL agent compared to the FA approach. The results further show that the $QR$-learning condition consistently outperforms the $QS$-learning condition while the combination of the two, $QRS$-learning, is found to improve the agent's average performance during training for most cases. The lower the score - the better. 
\end{table*}

\begin{figure}[ht]
	\centering 
    \includegraphics[height=8cm]{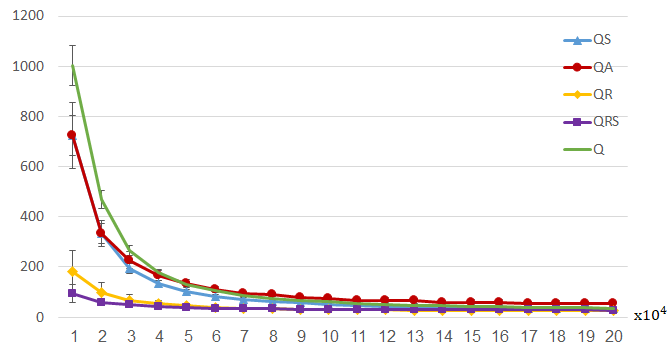}
    \caption{Pursuit agents' average learning curves under the examined conditions. The x-axis marks the number of training games. The y-axis marks the average game score. \textit{The lower the score - the better}. Error bars indicate standard error.} 
\label{fig:ex2_pursuit_non_experts}
\end{figure}

Only 9 out of the 128 agents (7\%) were flawed (2 $QS$ agents, 5 $QA$ agents, a single $QR$ agent and a single $QRS$ agent). In Experiment 1, flawed agents did not perform significantly worse than the baseline $Q$-learning condition. However, in Experiment 2, flawed agents performed quite poorly, scoring an average and asymptotic performance between 4 and 120 times \textit{worse} than the baseline $Q$-learning condition. 

Interestingly, a \textit{strong correlation} was observed between agents' average performance under the $QR$-learning and $QRS$-learning conditions (0.96), whereas a correlation of only 0.22 was found between the $QS$-learning and $QRS$-learning conditions. Very weak \textit{negative} correlations were found between the $QA$ agents' performance and other agents (-0.1 with $QS$ agents and $QR$ agents and -0.12 with the $QRS$ agents). A weak correlation was observed between the $QS$ agents and the $QR$ conditions (0.22). These results suggest that participants who were successful with one method were not necessarily successful with others. The only exception to the above claim is the $QR$-learning condition, which seems to bear the most effect on the $QRS$-learning condition as they are almost perfectly correlated. Results are summarized in Table \ref{table:exp2_pursuit_non_experts_correlation}.

\begin{table*}[t] 
	\centering
  	{\def \arraystretch{2}
    	\caption{Correlation between agent types in Experiment 2: Pursuit task}
        \label{table:exp2_pursuit_non_experts_correlation}
  		\begin{tabular}{ c c c c }
    		\toprule
             & QS & QR & QRS \\
            \hline
            QA          & -0.1012 & -0.0962 & -0.1152 \\
            \hline
            QS          & - & 0.2153 & 0.2262 \\
            \hline
            QR          & - & - & \textbf{0.9564} \\
            \bottomrule
   		\end{tabular}
   }
   \par \bigskip
   A strong positive correlation exists between the number of valid agents in the QR and QRS learning conditions.
\end{table*}

Considering each participant individually, we find that for 7 participants out of 16 (43.8\%) the best-performing agent, in terms of average performance, was the $QRS$ agent. For an additional 7 participants (43.8\%), the best-performing agent was the $QR$ agent. For the remaining 2 participants, the best-performing agent was the $QS$ agent.  Consistent with the results of Experiment 1, the $QA$ agent was not the best-performing agent for any of the participants. Deeper \say{head-to-head} analysis reveals similar trends -- 12 participants developed a $QS$ agent which outperformed their $QA$ agent (75\%) and 13 participants developed $QR$ and $QRS$ agents which outperform their $QS$ agent (81.3\%).
For 8 participants (50\%) the combination of the $QS$ and $QR$ agents -- the $QRS$ agent -- outperformed both their $QR$ and $QS$ agents.

As for the asymptotic performance of the tested agents, we find that for 8 participants out of 16 (50.0\%) the best-performing agent was the $QR$ agent. For an additional 5 participants (31.25\%), the best-performing agent was the $QRS$ agent. For only two participants the best-performing agent was the $QA$ agent and for only a single one the best-performing was the $QS$ agent. Consistent with the above results,  a \say{head-to-head} analysis reveals similar trends -- 11 participants developed a $QS$ agent which outperformed their $QA$ agent (68.75\%) and 13 participants developed $QR$ and $QRS$ agents which outperform their $QA$ agent (81.3\%).
For 10 participants (62.5\%) the combination of the $QS$ and $QR$ agents -- the $QRS$ agent -- outperformed both their $QR$ and $QS$ agents.

We further analyze the types of similarities and reward shaping functions that participants defined under the $QS$-learning and $QR$-learning conditions. This phase was done manually by the authors, examining the participants' codes and attempting to reverse-engineer their intentions. Under the $QS$-learning condition, and contrary to what one may expect, only a single participant instantiated representational similarities. This may be partially attributed to the \say{less-trivial} representation of the state-space (i.e., using differences instead of absolute x,y locations) as implemented in the original paper. Symmetry similarities were used by 5 out of the 16 participants (31.3\%), four of whom used angular rotations with 90$^{\circ}$, 180$^{\circ}$ and 270$^{\circ}$ transpositions of the state around its center (along with the direction of the action, see Figure \ref{fig:all}(b) for an illustration), and 3 of which used mirroring (2 used both). Interestingly, 9 out of 16 participants (56.3\%) defined transitional similarities, considering all state-action pairs which are expected to result in the same state. Under the $QR$-learning condition, most participants (14 out of 16, 87.5\%) developed agents based on motivating the predators to move towards the prey and discouraging them from moving in any other direction. This simple idea was shown to be highly effective, as depicted by the scores discussed above.
The remaining 2 participants also rewarded the predators based on the separation between the predators (intuitively, rewarding the predators for avoiding interfering with each other's moves).  This addition had mixed effects on the agent's performance.  

Considering the participants' TLX scores, using a one-way ANOVA test we find that the scores are significantly different ($F=6.79348$, $p<0.05$). Using post-hoc analysis, we find that the TLX results of both the $QA$-learning and $QR$-learning conditions are \textit{not significantly different}. However, the $QS$-learning condition was found to have higher mean TLX scores compared to both the $QA$-learning and $QR$-learning conditions ($p<0.05$). These results indicate that articulating similarities in the Pursuit domain demands higher levels of developers' effort compared to articulating reward shaping or basic function approximation. 
The full TLX results and tests results can be found on the project's webpage \url{http://www.biu-ai.com/RL}.

In addition to the TLX questionnaire, we administered a customized questionnaire that aims at extracting the participants' subjective experience during the experiment. 
The English version of the questionnaire can be found in Appendix \ref{app:exp2}.
Participants' answers demonstrate a few interesting phenomena: First, participants reported that they understand their task requirements  and purpose well ($Q_1$ in Appendix \ref{app:exp2}, averaging 9 out of 10), with no statistically significant difference between the different conditions. Interestingly, participants reported that the $QS$-learning condition was the most challenging ($Q_2$ in Appendix \ref{app:exp2}, averaging 5.6 compared to 8.1 and 7.5 under the $QA$-learning and $QR$-learning conditions, respectively, $p<0.05$. There was no statistically significant difference between the latter pair.). See Figure \ref{fig:ex2_pursuit_custom_qst} for graphical representation. 
We find support for the above in the participants' TLX scores: The $QS$-learning condition was shown to induce a higher mental demand (averaging 71.25 compared to 59.68 and 44.37 for the $QA$-learning and $QR$-learning conditions, respectively. Here, the difference between the $QA$-learning and $QR$-learning conditions was found to be statistically significant as well, $p<0.05$.). On the other hand, participants reported that under the $QS$-learning condition they could have improved the agent's performance much more if they were to be given more time ($Q_3$ in Appendix \ref{app:exp2}, averaging 6.8 compared to 4.6 and 4.5 under the $QA$-learning and $QR$-learning conditions, respectively, $p<0.05$. There was no statistically significant difference between the latter pair.). This is also supported by participants reporting extremely high time pressure under the $QS$-learning condition, as reflected by  the Temporal Demand index of their TLX scores (averaging 74.37 compared to 43.43 and 38.43 for the $QA$-learning and $QR$-learning conditions, respectively, $p<0.05$. There was no statistically significant difference between the latter pair.). 

\begin{figure}[h]
	\centering \includegraphics[height=4cm,width=0.5\textwidth]{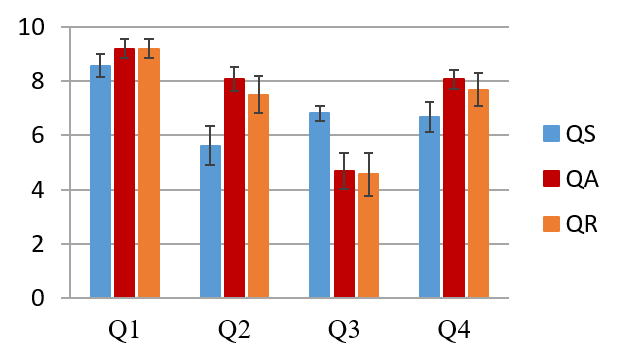}
    \caption{Pursuit post-experiment customized questionnaire average answers. \textit{See Appendix \ref{app:exp2} for details}.}
\label{fig:ex2_pursuit_custom_qst}
\end{figure}

The above results combine to suggest that the $QA$-learning and $QR$-learning methods were more natural for human designers \textit{for the pursuit task}, given the imposed time limit.   
This insight is also aligned with participants reporting the $QS$-learning condition as the least appropriate method for the pursuit task ($Q_4$ in Appendix \ref{app:exp2}, averaging 5.7 compared to 7.3 and 7.1 under the $QA$-learning and $QR$-learning conditions, respectively, $p<0.05$. There was no statistically significant difference between the latter pair). 
The full TLX scores and participants' answers are available at \url{http://www.biu-ai.com/RL}.




\underline{Mario:} As before, a submitted agent is considered successful if it outperforms the basic $Q$ agent in at least one of the two criteria:  average performance or asymptotic performance. 
Unfortunately, the \textbf{vast majority of submitted Mario playing agents were flawed} (71\%).  Specifically, 13 out of the 16 $QS$ agents (81.3\%), 10 out of the 16 $QA$ agents (62.5\%), and 11 out of the 16 $QR$ agents (68.8\%) were flawed. The average learning curves of the different conditions are  illustrated in  Figure \ref{fig:ex2_mario_non_experts}. 

The only significant result in this context is the superiority of the $QS$ agents over the $Q$-learning baseline in the first 4 batches of learning. 
While the $QS$ agents outperform the $QA$, $QR$ and $QRS$ agents, it is important to note that they are all superseded by the baseline $Q$-learning condition on average.


\begin{figure}[ht]
	\centering \includegraphics[height=8cm]{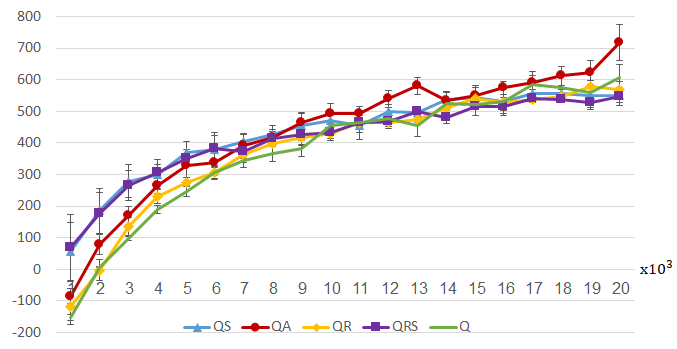}
    \caption{Mario agents' average learning curves under the examined conditions. The x-axis marks the number of training games. The y-axis marks the average game score. \textit{The higher the score - the better}. Error bars indicate standard error.}
\label{fig:ex2_mario_non_experts}
\end{figure}
 
It is uncommon for AI articles to report negative results. Nevertheless, we believe that some useful lessons can be learned from this part of the experiment. Specifically, the answers from participant questionnaires' can shed light on the results. First, participants reported that they understand their task requirements and purpose well ($Q_1$ in Appendix \ref{app:exp2}, averaging 9.5). Thus, a lack of understanding was not the problem in our case. Participants further indicated that they could significantly improve their agent's performance if they were given more time ($Q_3$, averaging 6.2 with no significant differences between the conditions). 
Given the participants' answers, also supported by short, informal interviews we conducted with participants after the experiment, we speculate that the imposed \textit{time constraint} was the main catalyst for  developing flawed agents. It is important to note in this context that the Mario task is significantly more complex than Simple Robotic Soccer or Pursuit in both state-action space and the game dynamics. As a result, participants are likely to require more time to come up with beneficial ideas (taking into account the complex game dynamics) and more time to instantiate different ideas (given the complex state-action space). Moreover, as noted before, Mario's testing procedure took up to half a minute compared to a few seconds in previous tasks. This alone reduced the development time significantly as participants spent a total of a few minutes \say{waiting for results} during their already limited development time. For concreteness, consider the following example: In the Pursuit task, a simple reward shaping function biasing the predators to move closer to the prey performed very well. Here, biasing Mario to move towards the princess (move right) does not work well as Mario has to avoid colliding with enemies, falling into gaps, and he needs to try to collect coins. Designing such a complex reward shaping function and implementing it may take significantly longer than the simplistic one in Pursuit. Furthermore, testing it would take significantly more time.  We find additional support for this hypothesis in Experiment 3 (Section \ref{sec:exp3}), where the human expert designer reported significantly more time needed to develop beneficial agents for the Mario task compared to the Simple Robotic Soccer and Pursuit domains.   

Interestingly, despite the discouraging results described above, participants reported that the $QS$-learning condition was the  \textit{least} challenging  ($Q_2$ in Appendix \ref{app:exp2}, averaging 7.4 compared to 6.1 and 5.8  under the $QA$-learning and $QR$-learning conditions, respectively, $p<0.05$. There was no statistically significant difference between the latter pair.). See Figure \ref{fig:ex2_mario_custom_qst} for graphical representation. 

\begin{figure}[h]
	\centering \includegraphics[height=4cm,width=0.5\textwidth]{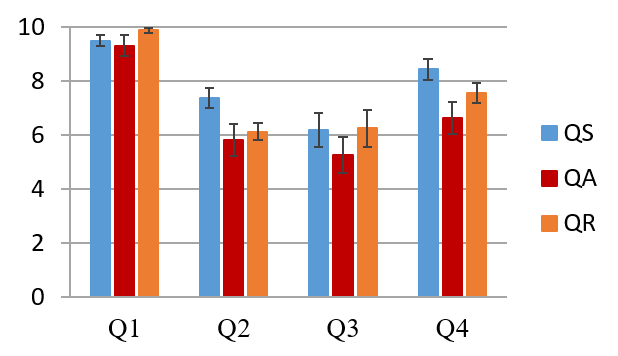}
    \caption{Mario post-experiment customized questionnaire average answers. \textit{See Appendix \ref{app:exp2} for details}}
\label{fig:ex2_mario_custom_qst}
\end{figure}

This is further supported by participants reporting the $QS$-learning condition as the least frustrating in their TLX scores (averaging  40.62 compared to 54.1 and 55.63 for the $QA$-learning and $QR$-learning conditions, respectively, $p<0.05$. There was no statistically significant difference between the latter pair.). 
Moreover,  participants reported the $QS$-learning condition as the \textit{most} appropriate method for the Mario task (averaging 8.3 compared to 6.4 and 4.7 under the $QA$-learning and $QR$-learning conditions, respectively, $p<0.05$. Here, the difference between the $QA$-learning and $QR$-learning conditions was found to be statistically significant as well, $p<0.05$.)

The above results suggest that due to the complexities associated with the Mario task, the time limit was too restrictive. Nevertheless, participants were able to indicate the $QS$-learning condition as the most natural and appropriate technique for this domain. Indeed, it was the only condition that was able to outperform the $Q$-learning on average, yet only for the few first training batches.

When we combine the results for the Pursuit and Mario tasks, they seem to support our initial hypothesis that more $QS$ and $QR$ speedup can allow most designers to produce better performing agents compared to the \textsc{FA} approach. Moreover, the results also seem to imply that the \textsc{RS} speedup method is superior under the Pursuit domain, and its combination with the $QS$ may provide an additional speedup in many cases. 


\subsection{Experiment 3: Expert Developers Study}\label{sec:exp3}

Experiments 1 and 2 focused on non-expert, technically-able human designers. In Experiment 3 we consider RL experts. In this experiment we seek to investigate expert use of the three speedup methods investigated before: \textsc{FA}, \textsc{RS} and \textsc{SASS}. To that end, we recruited 3 highly experienced, expert programmers with a Masters degrees in Computer Science and proven experience in RL (two of whom are 26 years old and the third is 27 years old). None of the experts co-author this paper. Each expert was asked to implement five RL agents: a basic $Q$ agent; a $QS$ agent; a $QA$ agent; a $QR$ agent; and a $QRS$ agent. Each expert was given a single RL task domain: Simple Robotic Soccer, Pursuit or the Mario game, as discussed in Section \ref{sec:domains}. 


Each expert was instructed to take as much time as he needs to implement the agents yet keep track over the invested time for each condition. After all agents were submitted, the second author interviewed each expert about his subjective experience and thoughts during the experiment using a semi-structured interview (see Appendix \ref{app:exp3}). 

Unfortunately, we were unable to get the three experts to come to our lab. As a result, each expert used his own personal computer to program the different agents. The reported running times of the agents are based on our post-hoc evaluation using a personal Linux computer with 16 GB RAM and a CPU with 4 cores, each operating at 4 GHz. 
All technical parameters used by the three experts in this study (learning rates, exploration type, etc.) are fully specified in their codes and are available on the project's webpage  \url{http://www.biu-ai.com/RL}.
For each task, we discuss the implemented agents and their results, followed by the expert's reflections on the task. 




\subsubsection{Simple Robotic Soccer}

For this task, our expert is a 26 year old male who works as a scientific programmer in one of the Israeli Universities. He completed a Masters degree (‘cum laude’) majoring in AI and completed significant works using RL during his Masters and current works.

The expert reported that developing each of the agents required approximately 30 minutes except for the $QRS$ agent, which required only a few minutes given the implemented $QS$ and $QR$ agents.

\noindent
The \underline{$QA$ agent} used a simple distance-based approach, which represented each state according to the learning agent's distance to its opponent and goal.  


\noindent
The \underline{$QS$ agent} used two major similarity notions: First, \textit{representational similarities} -- the agent artificially moves \textit{both players} together across the grid, keeping their original relative distance (see Figure \ref{fig:all}). As the players are moved further and further away from their original positions, the similarity estimation gets exponentially lower. Second, \textit{symmetry similarities} -- experiences in the upper half of the field are mirrored in the bottom part by mirroring states and actions with respect to the $Y$-axis and vice-versa.    
\textit{Transition similarities} were not defined by the expert for this task. 

\noindent
The \underline{$QR$ agent} used a shaping reward similar to the one proposed in \cite{bianchi2014heuristically}. The expert defined that moving towards the goal while on offense and towards the opponent while on defense receives an extra \say{bonus}. Therefore, whenever an action is intended to change the proximity (using the Manhattan distance) to the attacker or the goal (depending on the situation), a PBRS is given.

\noindent
The \underline{$QRS$ agent} combined the main ideas of the $QS$ and $QR$ agents without introducing new ones.

\noindent
\underline{Results:} Each agent was trained for 2,000 games. After each batch of 50 games, the learning was halted and 10,000 test games were played during which no learning occurred. The process was repeated 350 times. 

As expected of an expert, all submitted agents were successful (here they outperformed the baseline $Q$ agent in \textit{both} criteria). 
The results further show that the $QR$ agent outperforms the $QA$ and $QS$ agents from the first batch up to the $19^{th}$ batch, where it is outperformed by the $QS$ agent. Interestingly, the $QRS$ agent seems to take the best of the two, outperforming all agents from the first batch onwards.
See Figure \ref{fig:ex3_soccer_expert} for a graphical representation of the learning curves. 

 
The evaluation of 2,000 games reveals runtime differences between the conditions. The baseline condition, $Q$-learning, runs the fastest, completing the evaluation in 4.5 seconds. A similar runtime was also recorded for the $QR$ agent. The $QA$ agent was a bit slower than the $Q$ agent, requiring about 7 seconds to complete the evaluation. The most time-consuming agents were the $QS$ and $QRS$, requiring about 38-40 seconds each. 


\underline{Expert's Reflections}

The expert developed the $QS$ agent first, based on ideas and thoughts he had while developing the basic $Q$ agent. The implementation of those ideas was non-trivial, so the expert had to depend on \say{trial-and-error} most of the time. 
The $QA$ agent was developed next. The expert claims that this method allowed him to easily translate his knowledge into code. He posits that the \textsc{FA} approach is most similar to the way people evaluate their surroundings before they decide which action to take. He provided the following examples: \say{On a road junction, a driver ignores most of the available information around him and focuses solely on the traffic lights' color in order to decide whether or not to drive forward or keep still. That is what my soccer player did\ldots.}
Then, the $QR$ agent was developed. The expert believed that this was the most time efficient way to accelerate learning and claimed he would use reward shaping as a first speedup tool for future tasks. Following the success of the $QRS$ agent, the expert claimed that the similarity notions should be considered as a \say{second-line} speedup step.

\begin{figure}[h]
	\centering \includegraphics[height=8cm]{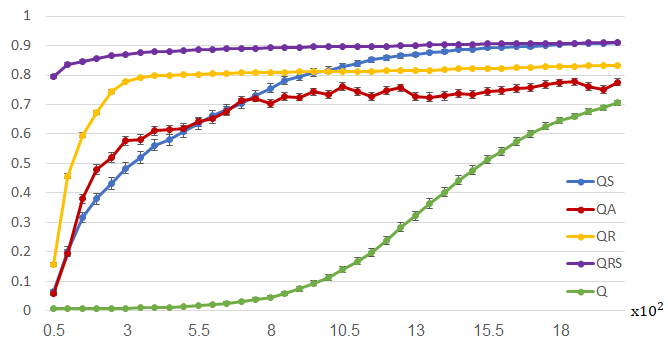}
    \caption{Soccer expert agents' average learning curves under the examined conditions. The x-axis marks the number of training games. The y-axis marks the average game score. The higher the score - the better. Error bars indicate  standard error.}
\label{fig:ex3_soccer_expert}
\end{figure}


\subsubsection{Pursuit}

For this task, our expert is a 27 year old male who has worked as a senior programmer for several years. He completed a Master's degree majoring in AI where his master's project focused on RL.

The expert reported that developing each of the agents required approximately 3 hours, except for the $QRS$ agent which required about 1 hour given the implemented $QS$ and $QR$ agents. 

\noindent
The \underline{$QA$ agent} was already defined by \cite{brys2014combining} who implemented a tile-coding approximation. The expert did not see a reason to change Brys's $QA$-learning implementation. 

\noindent
The \underline{$QS$ agent} was defined based on angular rotations and mirroring. 
Each state is represented as $\langle\Delta_{x_1},\Delta_{y_1},\Delta_{x_2},\Delta_{y_2}\rangle$ where $\Delta_{x_i}$ ($\Delta_{y_i}$) is the difference between predator $i$'s x-index (y-index) and the prey's x-index (y-index), thereby a similarity of $1$ was already set for all states in which the relative positioning of the prey and predators is the same.
Symmetry similarities were defined using 90$^{\circ}$, 180$^{\circ}$ and 270$^{\circ}$ transpositions of the state around its center (along with the direction of the action, see Figure \ref{fig:all}(b)). 
Furthermore, experiences in the upper (left) part of the field are mirrored in the bottom (right) part by mirroring states and actions and vice-versa. 
Transition similarities were defined for all state-action pairs that are expected to result in the same state. 

\noindent
The \underline{$QR$ agent} was designed based on a simple logic that encourages a predator to move towards the prey and punishes moves in any other direction.
The chosen shaping function returned extremely low artificial rewards ($\pm10^{-22}$).

\noindent
The \underline{$QRS$ agent} combined the notion of symmetry from the $QS$-learning condition with the $QR$-learning condition. The use of angular rotations was shown to hinder the $QRS$ agent's performance, and thus those were removed.

\underline{Results}\\ Each agent was trained for 10,000 games. After each batch of 100 games, the learning was halted and 10,000 test games were played during which no learning occurred. 
The process was repeated 50 times. 

Again, as one would expect of an expert, all submitted agents were successful (here they outperformed the baseline $Q$ agent in \textit{both} criteria). 
The results show that the $QRS$ agent is the most efficient one and that it learns significantly faster than other agents. In addition, the $QR$, $QS$ and $QA$ agents outperformed the baseline agent and show large improvements in the convergence rate. 
See Figure \ref{fig:ex3_pursuit_expert} for a graphical representation of the learning process. 


While the $Q$, $QA$ and $QR$ agents complete their training (10,000 games each) in $8.5$ seconds on average (with no significant difference between the two), $QS$ completes the same training in $17.5$ seconds on average. On average, the $QS$ agent updated $12$ entries per iteration. 

\underline{Expert's Reflections}

The expert first implemented the $QA$ agent, followed by the $QS$, $QR$ and $QRS$ agents, in that order. The expert claims that all tested methods were easy to instantiate and implement in the given domain. In terms of design effort, he sees no significant differences between the methods. He points out that under the $QR$-learning condition, the first reward shaping function he tried worked out to be the best one out of approximately a dozen functions he tested. This was not the case for the $QS$ agent, which he mentions to be \say{incremental}, namely the development process was a \say{step-by-step} process where at each step a new similarity notion was introduced, evaluated and refined if needed. He believes that the $QS$ and $QR$ learning conditions are the most intuitive methods he is aware of and he recommends using them, individually or in tandem, on a task-basis. Here, he claims that the $QS$-learning condition was the most appropriate. After we pointed out that the $QR$ agent outperformed the $QS$ agent he revised his answer, deeming both methods as \say{most appropriate}.

\begin{figure}[h]
	\centering \includegraphics[height=8cm]{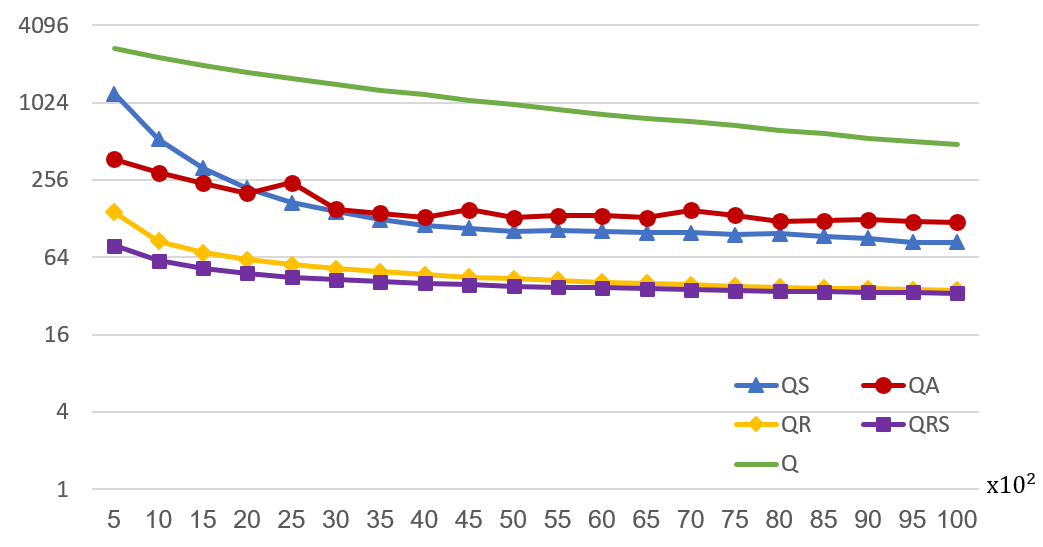}
    \caption{Pursuit expert agents' average learning curves under the examined conditions. The x-axis marks the number of training games. The y-axis marks the average game score \textit{in log-scale}. \textit{The lower the score - the better}. Standard errors are very small and thus are not noted in the figure.}
\label{fig:ex3_pursuit_expert}
\end{figure}


\subsubsection{Mario}

For this task, our expert is a 27 year old programmer who works as a software development team leader at a large international high-tech company. He is completing a Master's degree in Computer Science and has more than 10 years of programming experience, including work with RL.

The expert reported that developing each of the agents required a significant amount of time. The $QS$ agents required about 2.5 hours whereas the $QR$ agent required about 2 hours. The $QRS$ agent required only a few minutes given the implemented $QS$ and $QR$ agents.  

\noindent
The \underline{$QA$ agent} was implicitly defined by \cite{suay2016learning} from which the implementation was taken.  The expert did not change the given abstractions.


\noindent 
The \underline{$QS$ agent} used the following \textit{representational similarity} -- each state representation indicates whether Mario can jump or shoot using 2 Boolean variables. Given a state-action pair in which Mario does not jump or shoot, all respective states (i.e., the four variations of these two Boolean variables) were defined as similar to the original pair. Namely, if Mario walks right, then regardless of Mario's ability to shoot or jump, the state-action pair is considered similar to the original one. 
\textit{Symmetry similarities} were defined using the mirroring of the state-actions across an imaginary horizontal line that divides the environment in half, with Mario in the middle. 
As illustrated in Figure \ref{fig:all}(c), regardless of specific state, performing action $a$ (e.g., move right) is assumed to be similar to using action $a$+``run" (e.g., run right). 

\noindent
The \underline{$QR$ agent} used the following two basic ideas: 1) moving/jumping to the right is better than moving/jumping to the left; 2) avoid getting too close to enemies and obstacles.

The \underline{$QRS$ agent} was a simple combination of the $QS$ and $QR$ agents.

\underline{Results}\\ Each agent was trained for 200,000 games. After each batch of 10,000 games, learning was halted and 1,000 test games were played during which no learning occurred. 
The process was repeated 50 times. 
The two agents are also compared to human performance level as evaluated by \citep{suay2016learning}. 
The results show that the $QS$ agent learns faster than the $QA$ agent. The $QR$ agent learns even faster and outperforms all other agents up to the fifth batch, when it then converges with the $QRS$ agent. Overall, the $QRS$ agent performances very similar to the $QR$ agent but with a slightly worse performance in the first few episodes. 

See Figure \ref{fig:ex3_mario_expert} for a graphical representation of the learning curve.



\underline{Expert's Reflections}

The expert developed the $QS$ agent first. He mentioned that due to the complex state-action space representation, significant time was invested in manipulating encountered state-action pairs in order to generate the desired similar pairs. This task was made somewhat easier when developing the $QR$ agent, not due to the reward shaping technique but rather due to his experience. 
The $QR$ agent took advantage of very basic notions which the expert implemented very fast. The expert claims that the more time he put into developing better reward shaping functions, the \textit{worse} the functions turned out to be. Specifically, his best reward shaping function was the first or second one he tried. Conversely, he mentions that this was not the case for the $QS$-learning condition, in which additional similarities played a useful role in further speeding-up the RL process. He believes that the combination of reward shaping with similarities is the most suitable for this task. 

\begin{figure}[h]
	\centering \includegraphics[height=8cm]{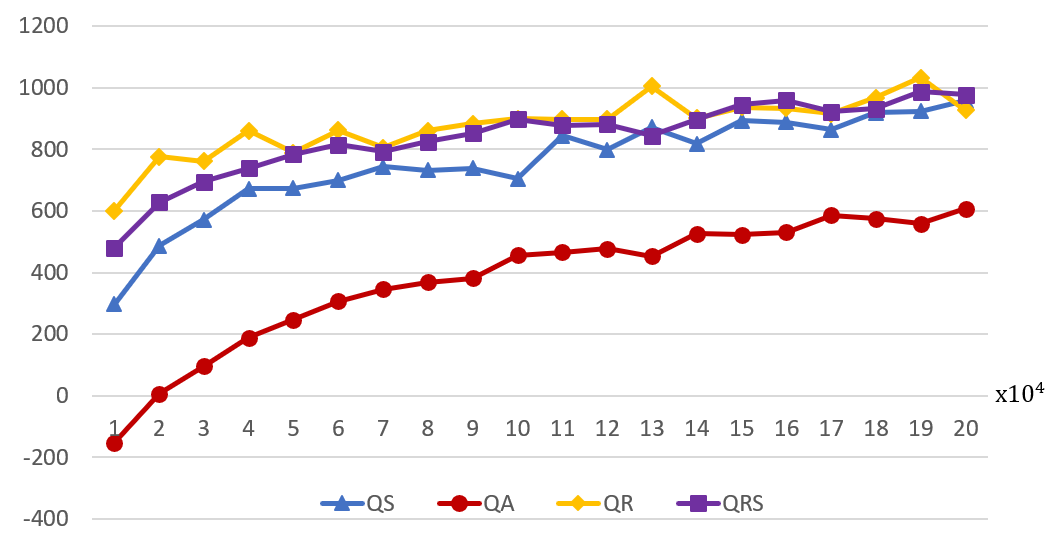}
    \caption{Mario expert agents' average learning curves under the examined conditions. The x-axis marks the number of training games. The y-axis marks the average game score. The higher the score - the better. Standard errors are very small thus are not noted in the figure.}
\label{fig:ex3_mario_expert}
\end{figure}

\section{Conclusions}\label{sec:conclusions}

In this first-of-its-kind human study, we explored how human designers, both expert and non-expert, leverage their knowledge in order to speed up RL. We focused on the challenge of injecting human knowledge into an RL learner using the notions of \textit{abstraction}, \textit{similarity} and \textit{reward shaping}. 

Interestingly, and contrary to its wide popularity in practice, the use of \textit{abstraction} was shown to provide poor speedup results throughout the study. Specifically, in our non-expert experiments (Experiments 1 and 2), the generalization approach (represented by the $QA$ agents) was consistently outperformed by other conditions, and in most cases designers were unable to outperform the baseline $Q$-learning condition using this approach. In our expert experiment (Experiment 3), the results present a similar trend. Specifically, participants were able to use abstraction to improve over the baseline $Q$-learning, yet in all tested settings this condition came last in terms of performance.  Our \textsc{SASS} approach, based on the notion of \textit{similarities} (represented by the $QS$ agents), has demonstrated mixed results. In Experiments 1 and 2, it was shown to outperform the baseline $Q$-learning and abstraction conditions in the vast majority of cases. However, in one of the tasks (Pursuit, Section \ref{sec:chase}) it was also shown to induce high levels of mental and temporal demand. Similar results were recorded in Experiment 3: On the one hand, the proposed method outperformed the $Q$-learning and abstraction conditions. On the other hand, experts disagree on the \say{intuitiveness} of the method. It is also important to note that the method seems to require more time on the designer's part compared to other methods. Quite consistently throughout the study, the \textit{reward shaping} condition (represented using the $QR$ agents) was shown to be both effective and natural for designers. Specifically, in Experiments 2 and 3, participants (both experts and non-experts) reported this technique to be the most suitable and intuitive technique, and in turn it provided superior agent performance compared to the above conditions. An exception is the Mario task in Experiment 2 (Section \ref{sec:exp2}), where all methods performed badly, making the results hard to interpret correctly. 

It turns out that \textbf{the best-performing agents in this study use the combination of reward shaping and similarities} (represented by the $QRS$ agents). In most cases, these agents use a simple (perhaps, na\"ive) combination of the defined similarities under the $QS$-learning conditions with the reward shaping function defined under the $QR$-learning conditions.  This combination is consistently superior to the use of a single speedup method, yet it requires some development overhead since the two methods have to be implemented. Given that the two methods have already been implemented, their combination is usually straightforward and requires negligible time. 

The above results combine to provide another, yet more general, insight: different techniques allow a designer to develop beneficial RL agents. However, the common \say{anecdotal proofs} one is likely to see in RL papers illustrating the usefulness of a proposed technique (usually provided and implemented by the authors themselves) do not guarantee that the technique would be beneficial in practice with other developers and do not provide one with any \say{hint} regarding the potential designers' effort in implementing the proposed approach. We believe that this insight is not restricted to the challenge of injecting human knowledge into an RL learner. Thus, we hope that this work will inspire other researchers to investigate their proposed approaches and techniques in human studies, with actual programmers, to ensure the ecological validity of their contributions.
 
In future work, we plan to extend the proposed  experimental approach to other RL algorithms (e.g., linear function approximation and deep reinforcement learning) and techniques (e.g., learning from demonstrations). As part of this additional step, we further plan to include non-technical users, who are not expected to read or modify code, something which was not included in this study. 

\section*{Acknowledgment}

This article extends our previous reports from AAMAS 2017 \citep{rosenfeld2017speeding} (short paper) and IJCAI 2017 \citep{rosenfeld2017leveraging} (full paper) in several major aspects: First, in the former, the \textsc{SASS} approach was presented and tested by three experts as described in Section \ref{sec:exp3}. Then, in \citep{rosenfeld2017leveraging}, the study was extended to include an additional 16 non-expert designers who implemented the $QS$-learning and $QA$-learning conditions as discussed in Experiment 1 (Section \ref{sec:exp1}). In this article, we almost \textit{triple} our participant pool by recruiting an additional 32 participants and perform an additional experiment (Experiment 2, Section \ref{sec:exp2}). As a result of this addition, we were able to investigate the \textit{reward shaping} condition, which was not investigated in previous reports, and provide a much broader and in-depth investigation of human designers. This addition also enhances the credibility and validity of our previously reported results and demonstrates new insights which were not previously observed.

An extended version of \citep{rosenfeld2017speeding} entitled \textit{\say{Speeding up Tabular Reinforcement Learning Using State-Action Similarities}} was presented at the Fifteenth Adaptive Learning Agents (ALA) workshop at AAMAS 2017 and received the \textit{Best Paper Award} of the workshop.

This research was funded in part by MAFAT. It has also taken place at the Intelligent Robot Learning (IRL) Lab, which is supported in part by NASA NNX16CD07C, NSF IIS-1734558, and USDA 2014-67021-22174.

\appendix
\label{appendix}
\section{Post-experiment subjective evaluation questionnaire (Experiment 2)}
\label{app:exp2}
	\begin{enumerate}
        \item How clear were the task requirements and purpose? 		
        \begin{table*}[h] \centering
          	\begin{tabular}{ c | c | c | c | c | c | c | c | c | c | c | c } 
            	\hline
                Not clear at all &
            	1 &  2 & 3 & 4 & 5 & 6 & 7 & 8 & 9 & 10 & 
                Very clear \\
                \hline
          	\end{tabular}
    	\end{table*}
        
        \item How complex was the task?
        \begin{table*}[h] \centering
          	\begin{tabular}{ c | c | c | c | c | c | c | c | c | c | c | c } 
            	\hline
                Highly complex &
            	1 &  2 & 3 & 4 & 5 & 6 & 7 & 8 & 9 & 10 & 
                Simple \\
                \hline
          	\end{tabular}
    	\end{table*}
        
        \item Say you were given additional time for the task. How much better do you think your agent could have become? 
        \begin{table*}[h] \centering
          	\begin{tabular}{ c | c | c | c | c | c | c | c | c | c | c | c } 
            	\hline
                It would stay the same &
            	1 &  2 & 3 & 4 & 5 & 6 & 7 & 8 & 9 & 10 & 
                Significantly better \\
                \hline
          	\end{tabular}
    	\end{table*}

		\item To what extent do you think that the speedup method you used is appropriate for the task in question?
        \begin{table*}[h] \centering
          	\begin{tabular}{ c | c | c | c | c | c | c | c | c | c | c | c } 
            	\hline
                Not appropriate at all &
            	1 &  2 & 3 & 4 & 5 & 6 & 7 & 8 & 9 & 10 & 
                Very appropriate \\
                \hline
          	\end{tabular}
    	\end{table*}

	\end{enumerate}

\section{Post-experiment semi-structured interview (Experiment 3)}
\label{app:exp3}

\begin{enumerate}
	\item How much effort did you invest while implementing each of the speedup methods?
    \item Which of the speedup methods you used was the most appropriate for speeding up the agent's learning?
    \item What are the advantages and disadvantages of each of the methods?
    \item Which of the methods allowed you to infuse your domain-knowledge to the agent in the most efficient way?
    \item Given a new problem-domain, how would you choose the most appropriate acceleration method to use?
\end{enumerate}

\bibliographystyle{agsm}
\bibliography{KERdoc}

\end{document}